\def \fro {\mathrm{F}}
\def\xray{\textsc{Xray} }
\def\hott{\emph{Hottopixx} }
\newtheorem{thm}{Theorem}[section]
\newtheorem{lem}{Lemma}[section]
\newtheorem{defn}{Definition}[section]
\def\argmax{\mathop{\rm arg\,max}}
\def\argmin{\mathop{\rm arg\,min}}
\newcommand{\reals}{\mathbb{R}}
\newcommand{\cc}{\mathcal{C}}
\def\argmax{\mathop{\rm arg\,max}}
\def\argmin{\mathop{\rm arg\,min}}
\newcommand{\vX}{\mathbf{X}}
\newcommand{\vY}{\mathbf{Y}}
\newcommand{\vA}{\mathbf{A}}
\newcommand{\vB}{\mathbf{B}}
\newcommand{\vU}{\mathbf{U}}
\newcommand{\vC}{\mathbf{C}}
\newcommand{\vN}{\mathbf{N}}
\newcommand{\vQ}{\mathbf{Q}}
\newcommand{\vR}{\mathbf{R}}
\newcommand{\vS}{\mathbf{S}}
\newcommand{\vW}{\mathbf{W}}
\newcommand{\vH}{\mathbf{H}}
\newcommand{\vLam}{\bm{\Lambda}}
\newcommand{\vx}{\mathbf{x}}
\newcommand{\vdelta}{\boldsymbol{\delta}}
\newcommand{\vs}{\boldsymbol{s}}
\newcommand{\vh}{\boldsymbol{h}}
\newcommand{\ve}{\mathbf{e}}
\newcommand{\vz}{\boldsymbol{z}}
\newcommand{\vb}{\mathbf{b}}
\newcommand{\vv}{\mathbf{v}}
\newcommand{\vr}{\mathbf{r}}
\newcommand{\vu}{\mathbf{u}}
\newcommand{\vp}{\mathbf{p}}
\newcommand{\vq}{\mathbf{q}}
\newcommand{\bq}{\begin{equation}}
\newcommand{\eq}{\end{equation}}
\newcommand{\ba}{\begin{eqnarray}}
\newcommand{\ea}{\end{eqnarray}}
\def\R{{\reals}}
\newcommand{\mbf}[1]{\mathbf{#1}}
\newcommand{\mcal}[1]{\mathcal{#1}}
\newcommand{\remove}[1]{}
\newcommand{\red}[1]{{\color{red} #1}}
\newcommand{\Cpp}{C\kern-0.05em\texttt{+\kern-0.03em+}}
\newcommand{\ConceptCpp}{ConceptC\kern-0.05em\texttt{+\kern-0.03em+}}
\title{Fast Conical Hull Algorithms for Near-separable Non-negative Matrix Factorization}
\author{\hspace{2mm}{Abhishek Kumar}\thanks{This work was done when AK was a summer intern at IBM Research.} $^\dagger$\hspace{2.4cm} {Vikas Sindhwani}$^\ddagger$\hspace{2.4cm} {Prabhanjan Kambadur}$^\ddagger$  \\\hspace{-4mm}{abhishek@cs.umd.edu}\hspace{1.8cm} {vsindhw@us.ibm.com}\hspace{2cm} {pkambadu@us.ibm.com} \\ \\ $^\dagger$Dept. of Computer Science, University of Maryland, College Park, MD, USA \\ $^\ddagger$IBM T.J. Watson Research Center, Yorktown Heights, NY 10598 USA
            }
\date{}
\begin{document}

\maketitle
\begin{abstract} 
The separability assumption~\citep{DonohoStodden,arora.stoc12} turns
non-negative matrix factorization (NMF) into a tractable problem. Recently, a
new class of provably-correct NMF algorithms have emerged under this
assumption. In this paper, we reformulate the separable NMF problem as that of
finding the extreme rays of the conical hull of a finite set of vectors. From
this geometric perspective, we derive new separable NMF algorithms that are
highly scalable and empirically noise robust, and have several other favorable
properties in relation to existing methods. A parallel implementation of our
algorithm demonstrates high scalability on shared- and distributed-memory machines.
 

\end{abstract}

\section{Introduction}
A data matrix $\vX$ of size  $m\times n$ is said to admit a Non-negative Matrix Factorization (NMF) 
with inner-dimension $r$, if $\vX$ can be expressed as $\vX = \vW \vH$ where $\vW, \vH$ are two non-negative matrices of 
dimensions $m\times r$ and $r \times n$  respectively. In many applications, a compact (i.e., small $r$) approximate NMF tends to provide a natural and interpretable part-based decomposition of the data~\citep{LeeSeung}, often more appealing than other low-rank factorizations. NMFs arise  pervasively in a variety of signal separation problems, such as modeling topics in text and hyperspectral image analysis~\citep{nmf:book}.

\begin{figure}[t]
\begin{center}
\includegraphics[clip=true,trim=0 1cm 0 2cm,height=2.8cm, width=0.5\linewidth]{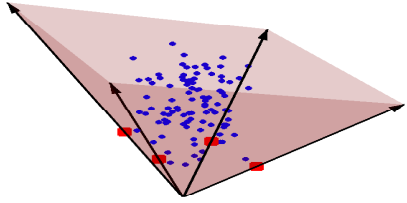}
\end{center}
\label{fig:geom}
\vspace{-0.4cm}
\caption{{\it Geometry of the NMF Problem. Separability implies that data is contained in a cone generated by a subset of $r$ ``anchor" points (red squares).}}
\vspace{-0.3cm}
\end{figure}

Figure~\ref{fig:geom} shows the geometry of the NMF problem. As a point cloud in $\reals^m$, all the $n$ columns of $\vX$ are contained inside a cone that is generated by $r$ non-negative vectors in $\reals^m$ comprising the columns of $\vW$. For any matrix $\vA$, let $cone(\vA)$ denote the set obtained by taking linear combinations of the columns of $\vA$ with non-negative coefficients. Then, the goal is to find a non-negative matrix $\vW$, with just $r$ columns, such that: 
$cone(\vX) \subseteq cone(\vW) \subseteq \reals_+^m$, where $\reals^m_+$ denotes the non-negative orthant in $\reals^m$. Such polyhedral nesting problems studied in computational geometry are known to be NP-hard, which makes the exact and approximate NMF problem also NP-hard~\citep{Vavasis.09}.
Faced with such results, almost the entire algorithmic focus in the NMF literature, e.g.,~\citep{nmf:book,LeeSeung,cjlin.nmf07,dhillon.kdd11}, has centered on treating the problem as an instance of general non-convex programming, leading to heuristic procedures that lack optimality guarantees beyond convergence to a stationary point of the objective function for approximate NMF. Very recently, in a series of elegant papers~\citep{arora.stoc12,bittorf.12,gillis.12,esser.11,elhamifar.cvpr12}, promising alternative approaches have been developed based on certain {\it separability} assumption on the data which enables the NMF problem to be solved exactly. Geometrically, the assumption states the following: \\ \vspace{-4mm}
\begin{defn}{\bf Separability Assumption:} {\it The entire dataset, i.e. all columns of $\vX$, reside in a cone generated by a small subset of $r$ columns of  $\vX$.}\vspace{-1mm}\end{defn} 
In algebraic terms, $\vX = \vW \vH = \vX_A \vH$ so that the $r$ columns of $\vW$ are hidden among the columns of $\vX$ (indexed by an unknown subset of indices $A$). Equivalently, a corresponding subset of $r$ columns of $\vH$ happen to constitute the $r\times r$ identity matrix. We refer to these columns as {\it anchors}~\citep{arora.stoc12}. Informally, in the context of topic modeling problems where $\vX$ is a document-word matrix and $\vW,\vH$ are document-topic and topic-term associations respectively, the separability assumption equivalently posits the existence of special {\it anchor words} in the vocabulary, whose occurence uniquely identifies the presence of a topic, and whose usage across the corpus is collectively predictive of the usage of all the other words.  The separability assumption was investigated earlier by~\citet{DonohoStodden} who showed that it implied uniqueness of the NMF solution, modulo permutation and scaling. In order to place our contributions in the right context, we first briefly provide a flavor of recently proposed separable NMF algorithms. \\

{\bf Related Work}: Assuming that the columns of $\vX$ are normalized to have unit $l_1$-norm, the separable NMF problem reduces to that of finding the extreme points (that is, points inexpressible as convex combinations of other points) of the convex hull of the columns~\citep{arora.stoc12}. A Linear Program (LP) can be setup to attempt to express a given column as a convex combination of the other columns. If this LP declares infeasibility, an extreme point is identified. This approach~\citep[Section 5]{arora.stoc12} requires solving $n$ feasibility LP's each involving $n-1$ variables which is not scalable for many problems of interest. A noise-robust version of the procedure further requires knowledge of parameters that are hard to estimate apriori.~\citet{bittorf.12} formulate a single LP whose solution resolves the exactly separable NMF problem. An extension is also developed for noise-robustness. Instead of invoking a general LP solver, a specialized algorithm is derived based on an incremental stochastic gradient descent procedure, and its parallel (multithreaded) implementation is benchmarked on large datasets. On the other hand, this algorithm requires estimates of primal and dual step sizes, converges only asymptotically, and does not explicitly exploit the sparsity of the final solution. \citet{gillis.12} develop a highly scalable approach closely related to rank-revealing QR factorizations for column subset selection. A perturbation analysis of this algorithm under noise is also presented. In~\citet{esser.11}, column subset selection is cast essentially as a form of multivariate regression with row-sparsity inducing norms, e.g., see~\citet{mahoney.cur}. Algorithms derived in this framework are asymptotically convergent, and sensitive to near-duplicate columns, making it necessary to perform certain adhoc preprocessing steps. \\

{\bf Contributions}: We present a new family of highly scalable and empirically noise-robust algorithms for separable NMFs, with several favorable properties:
\begin{compactitem}[$\circ$]
\item The algorithms produce a correct solution for the separable case after exactly $r$ iterations. They require no additional parameters. They are closely related to convex and conical hull finding procedures proposed in the computational geometry literature~\citep{clarkson,dula}. Computationally, the algorithms bear some resemblence to simultaneous Orthogonal Matching Pursuit~\citep{SparseBook,tropp.somp06} for sparse greedy reconstruction of multiple target variables from the same  subset of input variables. We also derive a variant based on this connection that performs quite well under noise.
\item Under controlled noise conditions in synthetic datasets and on real-world topic modeling problems, our algorithms consistently outperform other separable NMF techniques with respect to multiple performance metrics. Our methods are highly competitive with existing non-convex NMF algorithms, but are free of sub-optimal local minima and associated initialization issues. 
\item The solution for $(r-1)$ target anchors is contained in the solution for $r$ target anchors (unlike non-convex NMF methods), which makes it easier to do model selection on real-world datasets by keeping track of performance on a validation set. 
\item The algorithms are highly scalable and have small memory footprint. The sparsity of the data, the intermediate variables and the final solution is carefully exploited in a high-performance parallel and distributed implementation which scales excellently on both shared- and distributed-memory machines. For example, a twitter corpus with 125-thousand tweets can be factorized for $r=100$ in less than 10 seconds on a commodity 8-core machine.  
 \item Unlike all existing algorithms, no column normalization is needed. Such normalization interferes with the TFIDF weightings routinely used in text modeling applications, leading to performance loss. 
 \item Unlike~\citet{esser.11}, the algorithms do not require any special preprocessing to eliminate duplicate or near-duplicate columns.
\end{compactitem} 

\section{Fast Conical Hull Algorithms}

{\bf An informal description}: Figure~\ref{fig:cone2} provides some geometric intuition underlying the proposed approach. The algorithm executes $r$ iterations. 
In each iteration a new anchor column is identified. This corresponds to expanding a cone one extreme ray at a time, until the entire dataset is eventually contained in the cone defined by the full set of anchors. Figure~\ref{fig:cone2} illustrates one step of the algorithm where there is an existing cone defined by three extreme rays (marked 1 to 3). To identify the next extreme ray, the algorithm picks a point outside the current cone (a green point) and projects it to the current cone to compute a residual vector (we call this the {\it projection step}). This residual vector separates the current cone from at least one non-selected extreme ray that can be found by maximizing a specific selection criteria (we call this the {\it detection step}). Intuitively, the algorithm picks a face of the current cone (spanned by rays 1 and 3 in Figure~\ref{fig:cone2}) that ``sees" exterior points and rotates this face towards the exterior until it hits the ``last'' point. In the example shown in Figure~\ref{fig:cone2}, ray 4 is identified as a new extreme ray. 
\begin{center}
\begin{figure}[h]
\centering
\includegraphics[clip=true,trim=0 2mm 0 2.5cm,height=4.15cm, width=0.5\linewidth]{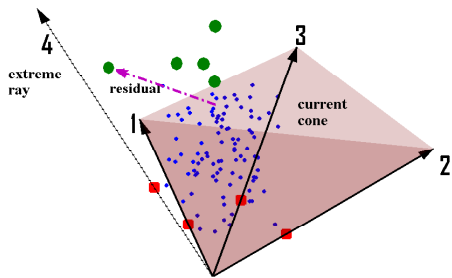}
\label{fig:cone2}
\vspace{-3mm}
\caption{Geometry of the Conical Hull algorithms}
\end{figure} 
\end{center}
These geometric intuitions are inspired by~\citet{clarkson,dula} who present LP-based algorithms for general convex and conical hull problems. Their algorithms are also directly applicable in our NMF setting, provided the data satisfies the separability assumption exactly. In this case, the residual of {\it any} single exterior point can be used to correctly expand the cone as described above. However, anchor detection criteria derived from {\it multiple} residuals demonstrates radically superior noise robustness, as we report in the experimental section.  The emphasis on scalability and noise-robustness thus leads us to a new family of algorithms whose implementation (and associated proof of correctness) is distinct from prior work.  \\

{\bf Cones, Extreme Rays and Projection}:  Here, we provide a short background on relevant geometric concepts and set some notation.  Recall that a {\it cone} $\cc$ is a non-empty convex set that is closed with respect to taking conic combinations (i.e., linear combinations with non-negative coefficients) of its elements.   A {\it ray} in $\cc$ generated by a vector $\vx\neq 0\in \cc$ is the set of all vectors $\{t\vx : t \geq 0\}$. A ray $R$ is an {\it extreme ray} if its generators cannot be expressed by taking conic combinations of elements in $\cc$ that do not themselves belong to $R$. A cone is called {\it finitely generated} if its elements are conic combinations of a finite set of vectors, and {\it pointed} if it does not contain both a vector $\vx$ as well as its negation $-\vx$.  A fundamental result (e.g., see~\cite{NemirovskiNotes}) states: {\it a pointed, finitely generated cone $\cc$ possesses a finite and unique set of extreme rays, and $\cc$ is the conical hull of the generators of these extreme rays}. Furthermore, the generators of these extreme rays are a subset of the finite set of vectors used to originally express the cone.   In the NMF context, note that any cone contained in $\reals^m_+$ is pointed. This implies that $cone(\vX)$ can also be described by a minimally compact set of generators, i.e., $cone(\vX)=cone(\vX_A)$ where $A$ uniquely indexes the extreme rays (anchors). Thus, a non-negative matrix $\vX$ admits a separable NMF with inner-dimension $r$ if the number of extreme rays of $cone(\vX)$, i.e. size of $A$, coincides with $r$. A face of a cone is the intersection between the cone and a supporting hyperplane. The projection of a point $\vx$ onto the cone generated by columns of a matrix $\vW$, i.e. computing $\vz^\star = \argmin_{\vz \in cone(\vW)} \|\vx - \vz\|^2_2$, can be obtained by solving a non-negative least squares problem, i.e., computing $\vh^\star = \argmin_{\vh\geq 0} \|\vx - \vW \vh\|^2_2$ and setting $\vz^\star = \vW\vh^\star$. All columns of $\vX$ can be simultaneously projected by solving $\vH^\star = \argmin_{\vH\geq 0}\|\vX - \vW \vH\|^2_2$. We will use the notation $\vR$ to denote the residual matrix after a projection operation, i.e., $\vR = \vX - \vW \vH^\star$. We will use the notation $\vX_i, \vR_i$ to denote the $i^{th}$ column of $\vX$ and its corresponding residual. The notation $\vq_+$ will denote the vector obtained by setting all negative entries of the vector $\vq$ to 0. \\


\subsection{Algorithm Description, Correctness Result and Variations}
Algorithm~\ref{alg:conic} details the steps of the proposed family of algorithms which we call \xray. Each iteration consists of two steps:
(i) a {\it detection step} that finds a column(s) of $\vX$ to be added as an anchor, and (ii) a {\it projection step} 
where all data points are projected onto the current cone to get the residuals. Projection is done by solving 
simultaneous nonnegative least squares problem using Algorithm~\ref{alg:nnls}. Every residual vector $\vR_i$ 
obtained after the projection step is normal to one of the faces of the current cone. In the selection step, we
pick a face of the current cone (identified by its normal $\vR_i$), normalize all the data points
to lie on the hyperplane $\vp^T\vx=1$ $\left(\vY_j = \frac{\vX_j}{\vp^T\vX_j}\right)$ for a strictly positive vector $\vp$, and expand the current cone by selecting 
an extreme ray that maximizes the inner product $\vR_i^T \vY_j$. The selection step can be implemented in various ways - some options are listed in Algorithm~\ref{alg:conic}.
\def\conic{}
\begin{center}
\begin{algorithm}[tb]
\caption{\xray: Algorithms for Separable NMF}
\label{alg:conic}
\begin{algorithmic}
{\small 
\STATE {\bfseries Input:} $\vX \in \reals_+^{m\times n}$ separable with inner dimension $r$ 
\STATE {\bfseries Output:} $\vW\in \reals_{m\times r}, \vH \in \reals_{r\times n}$, $r$ indices in $A$ \\
~~~~~~~~~~~~such that: $\vX = \vW \vH$, $\vW = \vX_A$ 
\STATE {\bfseries Initialize:} $\vR \gets \vX$, $A\gets \{\}$
\WHILE{$|A|<r$} 
\STATE 1. {\it {\bfseries Detection Step}: {\it Find an extreme ray}}.
\STATE  Below, $\vp$ is a strictly positive vector (not collinear with $\vR_i$):
\begin{equation}
j^\star = \argmax_j \frac{\vR_i^T \vX_j}{\vp^T \vX_j}~~\textrm{for any}~~i:\|\vR_i\|_2>0
~\label{eq:selection}
\end{equation}
Some exterior point selection criteria:
\begin{eqnarray}
\textrm{\textit{rand}}:  & ~~\textrm{any random}~~i : \|\vR_i\|_2>0\label{eq:rand}\\
\textrm{\textit{max}}:   & i = \argmax_k \|\vR_k\|_2\label{eq:max}\\
\textrm{\textit{dist}}:  & ~~i = \argmax_k \|\left(\vR_k^T \vX\right)_+\|_2\label{eq:dist} \\
\textrm{A \textit{greedy} variant:}  & ~j^* = \argmax_j \frac{\|(\vR^T \vX_j)_+\|_2^2}{\|\vX_j\|^2_2}\label{eq:greedy}
\end{eqnarray}
\vspace{-3mm}
\STATE 2. Update: $A\gets A\cup \{j^*\}$ (see Remarks)

\STATE 3. {\it {\bfseries Projection Step}: Project onto current cone.} 
\begin{equation}\vH = \argmin_{\vB\geq 0}\|\vX - \vX_A \vB\|^2_2~~\textrm{(Algorithm 2)}\label{eq:nnls}\end{equation}
\STATE 4. Update Residuals (not explicitly): $\vR = \vX - \vX_A \vH$ 
\ENDWHILE
}
\end{algorithmic}
\end{algorithm}
\end{center}

\def\nnls{}
\begin{algorithm}[tb]
\caption{\nnls Solver for: $\argmin_{\vB\geq 0}\|\vX - \vX_A \vB\|^2_2$}
\label{alg:nnls}
\begin{algorithmic}
{\small 
\STATE {\bfseries Input:} $\vX \in \R^{m \times n}$, Index set $A$ with $r$ indices, Initial value for warm-starts:  $\vB_{init}\in \R^{r \times n}$\\
Convergence paramaters $tol$, $maxcycles$
\STATE {\bf Initialize:} $\vB = \vB_{init}$
\STATE Set: $\vC = \vX^T \vX \in \reals^{n \times n}$
\STATE $\vS = \vC_{A,A}\in \reals^{r \times r}$
\STATE $\vs = diag(\vS)$
\STATE $\vU = \vB^T \vS \in \reals^{n\times r}$
\WHILE{true}
\FOR{$i=1$ \ldots $r$ ({\it //cyclic coordinate descent})} 
\STATE $\vb = \vdelta = (\vB^T)_i$
\STATE $\vu = \vU_i - s_i\vb$
\STATE $\vb = s^{-1}_i(\vC_i - \vu)_+$
\STATE $\delta = \vb  - \vdelta$
\STATE $\vU = \vU +  \vdelta\vS_i^T$~~~{\it // sparse rank-1 update}
\STATE $(\vB^T)_i = \vb$
\ENDFOR
\STATE $objective = \|\vX\|^2 + \sum_{i=1}^r (\vU_i + \vC_i)^T (\vB^T)_i  $
\STATE Exit if $\Delta objective < tol$, or $\#$iters $ > maxcycles$.
\ENDWHILE
}
\end{algorithmic}
\end{algorithm}
\vspace{-4mm}

\remove{
We propose the following three variants of the approach depending on the selection operator $\mcal{S}$ and the vector $p$ used in the
selection step:
\begin{itemize}
\item {\bf Conic:} $\mcal{S}(\vv) = \ve_i^T \vv$ for any $i$ such that $\vR_i\neq 0$, and $\vp$ is any positive 
vector such that $\vp$ and $\vR_i$ are linearly independent.
\item {\bf NNOMP:} $\mcal{S}(\vv) = \lVert (v)_+\rVert_2^2$ and $\vp = \frac{\vX_j}{\lvert \vX_j\rVert}$. 
\item {\bf NNOMP-sep:} $\mcal{S}(\vv) = \lVert (v)_+\rVert_2^2$ and $\vp$ is any positive vector. 
\end{itemize}
}

To show that \xray correctly identifies all the extreme rays, we need the following lemmas.
\begin{lem}
The residual matrix $\vR$, obtained after projection of columns of $\vX$ onto the current cone 
satisfies $\vR^T \vX_A \leq 0$, where $\vX_A$ are the extreme rays of the current cone.
\label{lem:kktnnls}
\end{lem}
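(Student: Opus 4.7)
The plan is to obtain this inequality as a direct consequence of the first-order optimality (KKT) conditions for the non-negative least squares problem that defines the projection step (equation~\eqref{eq:nnls}). Since the simultaneous projection $\vH = \argmin_{\vB \geq 0}\|\vX - \vX_A \vB\|_2^2$ decouples across columns of $\vX$, I would first reduce the statement to a per-column argument: for each column index $i$, the vector $\vH_i$ solves the single nonnegative least squares problem $\min_{\vh \geq 0}\|\vX_i - \vX_A \vh\|_2^2$, and the corresponding residual is $\vR_i = \vX_i - \vX_A \vH_i$.

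Next I would write down the KKT conditions for this convex problem. The gradient of $\tfrac{1}{2}\|\vX_i - \vX_A \vh\|_2^2$ with respect to $\vh$ is $-\vX_A^\transpose(\vX_i - \vX_A \vh)$. Stationarity together with dual feasibility for the inequality constraint $\vh \geq 0$ requires the gradient at the optimum to be componentwise nonnegative, i.e.
\begin{equation*}
-\vX_A^\transpose (\vX_i - \vX_A \vH_i) \;\geq\; 0,
\end{equation*}
which is precisely $\vX_A^\transpose \vR_i \leq 0$. (Complementary slackness $\vH_i \odot \vX_A^\transpose \vR_i = 0$ holds as well, but only the sign condition is needed here.)

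Finally, collecting this inequality across all $n$ columns of $\vX$ gives $\vX_A^\transpose \vR \leq 0$ as a matrix inequality, which transposes to the claimed $\vR^\transpose \vX_A \leq 0$. There is essentially no obstacle: the only subtlety is to note that the joint problem in $\vB$ separates into independent per-column subproblems, so that the KKT conditions can be applied one column at a time, and that Algorithm~\ref{alg:nnls} is an exact solver for a convex problem so its output does satisfy these KKT conditions (as opposed to only approximately at termination, but this is the standard assumption when invoking the optimality characterization).
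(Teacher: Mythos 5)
Your proposal is correct and follows essentially the same route as the paper: both derive the inequality from the KKT conditions of the nonnegative least squares projection, identifying $-2\vX_A^\transpose \vR$ with the nonnegative Lagrange multipliers (the paper writes this once in matrix/Lagrangian form, while you decouple it column-by-column, which is an immaterial difference). No gaps.
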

\begin{proof}
Residuals are given by $\vR = \vX - \vX_A \vH$, where $\vH = \argmin_{\vB\geq 0} \lVert \vX - \vX_A \vB\rVert_\fro^2$. \\
Forming the Lagrangian for Eq.~\ref{eq:nnls}, we get
$L(\vB,\vLam) = \lVert \vX - \vX_A \vB\rVert_\fro^2 - tr(\vLam^T\vB)$, where the matrix $\vLam$ contains
the nonnegative Lagrange multipliers.
Differentiating w.r.t. $\vB$ and evaluating at the optimum $\vB = \vH$, we have the following from the KKT conditions:
$ \qquad 2\vX_A^T(\vX_A\vH - \vX) - \vLam = 0$ \\
$\quad \Rightarrow  -2\vX_A^T \vR = \vLam \geq 0 \quad \Rightarrow \vR^T \vX_A \leq 0$
\end{proof}
\begin{lem}
For any point $\vX_i$ exterior to the current cone, we have $\vR_i^T \vX_i > 0$, where $\vR_i$ is the residual
of $\vX_i$ obtained by projecting it onto the current cone. 
\label{lem:extcone}
\end{lem}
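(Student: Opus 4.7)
The plan is to decompose $\vX_i$ into its projection onto the cone plus the residual, and then use the KKT conditions from Lemma~\ref{lem:kktnnls} together with complementary slackness to kill the cross term. Concretely, write $\vX_i = \vZ_i + \vR_i$ where $\vZ_i = \vX_A \vh_i^\star$ is the projection and $\vh_i^\star \geq 0$ is the optimal coefficient vector from the nonnegative least squares problem $\argmin_{\vh \geq 0} \|\vX_i - \vX_A \vh\|_2^2$. Then
\begin{equation*}
\vR_i^T \vX_i \;=\; \vR_i^T \vZ_i + \vR_i^T \vR_i \;=\; (\vR_i^T \vX_A)\, \vh_i^\star + \|\vR_i\|_2^2.
\end{equation*}

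First I would show that the cross term $(\vR_i^T \vX_A)\, \vh_i^\star$ vanishes. By Lemma~\ref{lem:kktnnls} applied to the per-column problem, the KKT stationarity condition yields $-2\vX_A^T \vR_i = \vlam_i \geq 0$, and complementary slackness gives $\vlam_i^T \vh_i^\star = 0$. Since both $\vlam_i$ and $\vh_i^\star$ are componentwise nonnegative, this forces $(\vX_A^T \vR_i)_j \cdot (\vh_i^\star)_j = 0$ for each coordinate $j$, hence $\vR_i^T \vX_A \vh_i^\star = 0$. (Equivalently, $\vR_i$ is orthogonal to the projection $\vZ_i$, which is the standard orthogonality property of a projection onto a closed convex cone.)

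Second, I would use the exteriority hypothesis. By assumption $\vX_i \notin cone(\vX_A)$, so $\vX_i \neq \vZ_i$, which means $\vR_i \neq 0$ and therefore $\|\vR_i\|_2^2 > 0$. Combining with the vanishing cross term gives $\vR_i^T \vX_i = \|\vR_i\|_2^2 > 0$, as required.

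I do not anticipate a real obstacle here: the whole content of the lemma is the orthogonality of the residual to the projection, which is immediate from the KKT conditions already established in Lemma~\ref{lem:kktnnls}. The only mild subtlety is being explicit that complementary slackness (not just dual feasibility) is what turns the inequality $\vR_i^T \vX_A \leq 0$ into the equality $\vR_i^T \vZ_i = 0$ needed to isolate $\|\vR_i\|_2^2$.
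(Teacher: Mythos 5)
Your proof is correct and follows essentially the same route as the paper's: decompose $\vX_i = \vX_A \vh_i^\star + \vR_i$, use the KKT stationarity and complementary slackness from Lemma~\ref{lem:kktnnls} to show $\vR_i^T \vX_A \vh_i^\star = 0$, and conclude $\vR_i^T \vX_i = \|\vR_i\|_2^2 > 0$ from $\vR_i \neq 0$. Your explicit remark that complementary slackness (not merely dual feasibility) is what yields the orthogonality equality is a nice clarification but does not change the argument.
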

\begin{proof}
Let $\vR = \vX - \vX_A \vH$, where $\vH = \argmin_{\vB\geq 0} \lVert \vX - \vX_A \vB\rVert_\fro^2$ and $\vX_A$ 
are the extreme rays of the current cone.
From the KKT conditions (used in the proof of Lemma~\ref{lem:kktnnls}) we have $2\vR^T \vX_A = -\vLam^T$,
where $\vLam$ are the Lagrange multipliers.
Hence, $2\vR_i^T\vX_A = -\vLam_i^T$ ($i$th row of both left and right side matrices). From the complementary
slackness property, we have $\vLam_{ji}\vH_{ji} = 0\,\, \forall\, j,i$. Hence, $2\vR_i^T\vX_A\vH_i = -\vLam_i^T\vH_i = 0$. \\
Hence we have $\vR_i^T \vX_i = \vR_i^T (\vR_i + \vX_A \vH_i) = \lVert\vR_i\rVert_2^2 + \vR_i^T\vX_A\vH_i 
= \lVert\vR_i\rVert_2^2 > 0$ since $\vR_i\neq 0$. 
\end{proof}
Using the above two lemmas, we prove the following theorem regarding the correctness of 
Algorithm~\ref{alg:conic}.
\begin{thm}
The data point $\vX_{j^*}$ added at each iteration in the Detection step of Algorithm~\ref{alg:conic}, if the maximizer in Eqn.~\ref{eq:selection} is unique, is an extreme ray 
of $\cc$ that has not been selected in previous iterations. 
\end{thm}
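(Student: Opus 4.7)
The plan is to prove both assertions---that $j^*$ is fresh (not in the current $A$) and that $\vX_{j^*}$ generates an extreme ray of $\cc$---using Lemmas~\ref{lem:kktnnls}--\ref{lem:extcone} together with the separability assumption. Throughout, let $f(j) := \vR_i^T \vX_j / (\vp^T \vX_j)$ so that $j^* = \argmax_j f(j)$; the denominator is strictly positive because $\vp$ has strictly positive entries and every column $\vX_j$ is assumed to be non-zero.

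For the freshness claim, pick any $i$ with $\|\vR_i\|_2 > 0$ (such an $i$ exists as long as the algorithm has not already terminated). Lemma~\ref{lem:kktnnls} gives $\vR_i^T \vX_k \leq 0$ for every $k \in A$, so $f(k) \leq 0$ on the current index set, whereas Lemma~\ref{lem:extcone} gives $f(i) = \|\vR_i\|_2^2 / (\vp^T \vX_i) > 0$. Hence the maximum of $f$ is strictly positive, and so $j^* \notin A$.

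For the extreme-ray claim, I would invoke separability to write $\vX = \vX_{A^*} \vH^*$ with $\vH^* \geq 0$, where $A^*$ indexes the (unique) set of extreme rays of $\cc$. Then for every column $j$,
\[
f(j) \;=\; \frac{\sum_{k \in A^*} \vH^*_{kj}(\vp^T \vX_k)\, f(k)}{\sum_{k \in A^*} \vH^*_{kj}(\vp^T \vX_k)},
\]
exhibiting $f(j)$ as a convex combination of the anchor values $\{f(k)\}_{k \in A^*}$ with non-negative weights that are strictly positive exactly on $\{k : \vH^*_{kj} > 0\}$. Consequently $f(j) \leq \max_{k \in A^*} f(k)$, with equality only when every $k$ with $\vH^*_{kj} > 0$ attains the maximum. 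If $j^* \notin A^*$, this equality condition produces an anchor index $k \in A^*$ with $k \neq j^*$ that also maximizes $f$, contradicting the uniqueness hypothesis.

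The main obstacle is precisely this uniqueness bookkeeping in the last step: one must also rule out the degenerate case in which $\vX_{j^*}$ is a positive multiple of a single anchor column $\vX_k$, for then $f(j^*) = f(k)$ with $j^* \neq k$ would again violate uniqueness---this is exactly why the theorem invokes the uniqueness hypothesis on~(\ref{eq:selection}). Combining the two assertions, $j^* \in A^* \setminus A$, so the algorithm only ever appends true, previously unchosen anchors and an immediate induction shows $A \subseteq A^*$ throughout the run.
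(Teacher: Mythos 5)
Your proof is correct and follows essentially the same route as the paper's: the freshness claim uses Lemma~\ref{lem:kktnnls} (scores $\leq 0$ on current anchors) together with Lemma~\ref{lem:extcone} (score $>0$ at the chosen exterior point), and the extreme-ray claim writes each normalized score as a convex combination of the anchor scores via separability and invokes uniqueness of the maximizer. Your expression for $f(j)$ as a weighted average is just an unpacked version of the paper's $\vY_j = \vY_A \vC_j$ with $\mathbf{1}^T\vC_j = 1$, $\vC_j \geq 0$.
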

\begin{proof}
Let the index set $A$ identify all the extreme rays of $\cc$. Under the separability assumption, we have $\vX = \vX_A \vH$. 
Let the index set $A^t$ identify the extreme rays of the current cone $\cc^t$. 

Let $\vY_j = \frac{\vX_j}{\vp^T\vX_j}$ and $\vY_A = \vX_A [diag(\vp^T\vX_A)]^{-1}$ (since $\vp$ is strictly positive, 
the inverse exists).
Hence $\vY_j =$ $\vY_A \frac{[diag(\vp^T\vX_A)] \vH_j}{\vp^T\vX_j}$. Let $\vC_j = \frac{[diag(\vp^T\vX_A)] \vH_j}{\vp^T\vX_j}$. 
We also have $\vp^T\vY_j = 1$ and $\vp^T\vY_A = \mbf{1}^T$. Hence, we have $1 = \vp^T\vY_j = \vp^T\vY_A \vC_j = \mbf{1}^T \vC_j$. 

Using Lemma~\ref{lem:kktnnls}, Lemma~\ref{lem:extcone} and the fact that $\vp$ is strictly positive, we have 
$\max_{1\leq j\leq n} \vR_i^T \vY_j = \max_{j\notin A^t} \vR_i^T \vY_j$. Indeed, for all $j\in A^t$ we have
$\vR_i^t \vY_j \leq 0$ using Lemma~\ref{lem:kktnnls} and there is at least one $j=i\notin A^t$ for which
$\vR_i^t \vY_j > 0$ using Lemma~\ref{lem:extcone}. Hence the maximum lies in the set $\{j: j \notin A^t\}$.

Further, we have $\max_{j\notin A^t} \vR_i^T \vY_j = \max_{j\notin A^t} \vR_i^T \vY_A \vC_j 
\leq \max_{j\in (A \setminus A^t)} \vR_i' \vY_j$. The second inequality is the result of the fact that
$\lVert \vC_j\rVert_1 =1$ and $\vC_j\geq 0$. This implies that if there is a unique maximum at a
$j^* = \argmax_{j\notin A^t}\vR_i^T \vY_j$, then $\vX_{j^*}$ is generator of an extreme ray of the cone $\cc$.
\end{proof}
{\bf Remarks}: (1) If the maximum occurs at two points $j_1^*$ and $j_2^*$, both these points $\vX_{j_1^*}$ and $\vX_{j_2^*}$
generate the extreme rays of the cone $\cc$. Hence both are added to anchor set $A$. 
If the maximum occurs at more than two points, 
some of these are the generators of the extreme rays of $\cc$ and others are conic combinations of these generators.
We can identify the extreme rays of this subset of points by calling Algorithm 1 recursively and add them to anchor set $A$. 
(2) Note that the algorithm is not influenced by presence of repeated anchors. (3) In the Algorithm, the vector $\vp$ simply needs to satisfy $\vp^T \vx_i >0, i=1\ldots n$. In our implementation, we simply used $\vp = [1,\ldots 1]\in \reals^m$, i.e., $\vp^T \vx_i = \|\vx_i\|_1$. 
(4) Note that unlike \citet{gillis.12}, we do not need $\vX_A$ to be full-rank. \\

{\bf Exterior Point Selection}: It can be noted that residual of any point exterior to the current cone (i.e., any $\vR_i\neq 0$)
can be used in the selection step of Algorithm~1. This gives us multiple ways of expanding
the current cone depending on which $i$ is chosen - all of which solve the separable problem but may behave very differently in the presence of noise.  Some natural options are listed in Algorithm 1: choosing a random exterior point (Eqn.~\ref{eq:rand}), one with maximum residual norm (Eqn.~\ref{eq:max}) or one which defines a normal to a supporting hyperplane of the current cone which ``sees"  maximum ``mass" of points in its positive halfspace, as measured by Eqn.~\ref{eq:dist}. In the experiments, we will refer to these variants as \xray (rand), \xray (max) and \xray (dist) respectively. \\

{\bf A Greedy variation for noisy data}:   In high dimensional noisy data almost all the points may masquerade as anchors.  
A natural choice is to expand the current cone greedily by selecting a point that best minimizes the current residual, 
i.e., $j^* = \argmin_j \min_{\vb>0} \lVert \vR - \vX_j \vb^T\rVert_F^2$. This selection criterion
simplifies to Eqn.\ref{eq:greedy} in Algorithm 1 (referred as \xray (greedy) henceforth). 
One may view this approach as implementing a nonnegative variant of simultaneous orthogonal matching  pursuit~\citep{tropp.somp06}, 
which is a greedy approach to the problem of sparse regression of multiple response variables on the same subset of explanatory variables, i.e., for solving $\min_{\vB\geq 0} \lVert \vX - \vX\vB\rVert_F^2$ s.t. $\lVert \vB\rVert_{0,1} = r$ where $\|\vB\|_{0,1}$ pseudo-norm counts the number of non-zero rows in $\vB$. 
In the context of separable NMF, both response variables and explanatory variables are the columns data matrix $\vX$. 
A relaxed version of this problem is solved in~\citet{esser.11} ($\min_{\vB\geq 0} \lVert \vX - \vX\vB\rVert_F^2 + \lambda\lVert\vB\rVert_{1,\infty}$). 
It is also possible to have $\lVert\vB\rVert_{1,2}$ penalized variant~\citep{tropp.06,mahoney.cur} which is natural for sparse multivariate regression problems.
Note that the greedy approach is not guaranteed to solve the separable NMF problem, but may perform well in the noisy settings as we observe in our experiments. Intuitively, this variant is concerned with greedily optimizing all residuals on average at every iteration, instead of making a decision based on the residual of a single, albeit well-chosen, exterior point. \\

{\bf Solution Refinement and Model Selection}: In practice, the separable solution $(\vW,\vH)$ as obtained from Algorithm 1 may be further refined with a few steps of alternating optimization with respect to a divergence measure of interest (e.g., Frobenius reconstruction $\|\vX - \vW\vH\|^2_{\fro}$). Also, in real-world datasets, the value of $r$ is typically unknown. Since our algorithms build the solution one anchor at a time, $r$ can be set based on a performance measure evaluated on held-out data. Alternatively, Algorithm 1 can exit if the amount of improvement from introducing a new anchor falls below a prespecified threshold.



\subsection{Scalability and Parallelization}
Here we describe various implementation details that allow us to gracefully scale to large sparse datasets (e.g., document-term matrices). The detection step can be parallelized by scoring the candidate anchors simultaneously. Likewise, the projection step involves solving Eqn.~\ref{eq:nnls}, which is separable in the columns of $\vB$ and hence can be optimized in parallel.  \\

{\bf Detection Step}: We avoid materializing the dense residual matrix $\vR$ in the evaluation of the anchor selection criteria. Instead, we score candidate anchors on-the-fly as we compute (but not explicitly materialize) a matrix $\vQ = \left(\vR^T \vX\right)_+ = \left(\vC - (\vC_A\vH)^T\right)_+$ where $\vC = \vX^T \vX$ denotes a covariance matrix (word-by-word for topic modeling applications). Here, the potential sparsity, symmetry of the covariance matrix $\vC$ as well as the non-negativity of $\vH$ can be further exploited. For example, if $\vC_{ij}=0$, the corresponding entry in the product $(\vC_A\vH)$ need not be computed, since the resulting negative value is anyway reset to zero by the $(\cdot)_+$ thresholding operator.  On a $P$ core machine, the selection criteria may be evaluated in $O(\frac{nnz(\vC) r}{P})$ time where $nnz(\vC)$ is the number of non-zeros in $\vC$. If $\vC$ is dense, we compute $\vQ$ using parallel dense BLAS-3 operations. The one time computation of $\vC$ is done via a parallel aggregation of rank-one outer-product terms defined by the rows of $\vX$. \\

{\bf Projection Step}: Algorithm 2 gives the steps of a cyclic block coordinate descent algorithm organized around very light-weight incremental sparsity-exploiting updates for solving Eqn.~\ref{eq:nnls} (derivation omitted for brevity).  The algorithm can be invoked in parallel on columns of $\vX$ to compute the corresponding columns of $\vB$. The previous value of $\vB$ is used to warm start the optimization and typically a very small number of iterations is needed for convergence.  
\remove{
{\bf Complexity Analysis and Memory requirements:} Across all $r$ iterations, the detection step has cost $O(\frac{q r^2}{P})$ while the projection step has cost $O(\frac{n r^3}{P})$. The algorithm has linear dependency on $m$ absorbed in the one-time computation of the covariance matrix $\vC$. Because of incremental updates, the projection step in practice is much faster than the detection step for practical values of $r$. The memory requirements are $O(nr + nnz(\vC) + nnz(\vX))$. If $\vC$ is highly dense and $n$ is large, one can recompute elements of $\vC$ as needed using sparse matrix multiplications instead of materializing it explicitly.
}
\remove{
{\bf Materialization of Residual Matrix:}

{\bf Remark}: Materialization of $R$. Projection only needs to be done for $\vr_i\neq 0$

{\bf Remark}: Column sampling, randomized selection step

{\bf Use of warm starts}

{\bf Parallelization}: both NNLS and Selection step.

\begin{itemize}
\item Geometric interpretation of NMF in terms of finding extreme rays of a finitely-generated cone with $r$ generators
\item Separability assumption: the $r$ generators coincide with the data. Connect Column subset selection.
\item Describe algorithmic strategies: one approach is via feasibility LPs. Motivate an incremental approach and relate it to Clarkson and Dula, and to greedy methods.
\item Describe proposed algorithm.
\item Correctness theorem and its proof.
\item Remarks on noise case - multiple selection operator variations - point out the NNOMP version. 
\item Complexity Analysis
\item Implementation and Parallelization details.
\end{itemize}

} 


\vspace{2mm}
\section{Empirical Observations}
\label{sec:emp}
Here, we report extensive comparisons on synthetic and medium-scale topic modeling problems, and  benchmark our parallel implementation on large text datasets on multicore machines and distributed systems. We compare with the methods proposed in~\citet{bittorf.12} (abbrv. as \hott) and~\citet{gillis.12}
 (abbrv. as GV), as well as traditional NMFs based on alternating optimization \citep{nmf:book}. 
The source codes for \hott and GV were taken from the respective authors' websites.
In comparisons with~\citet{esser.11}, it was observed that it tends to select near-duplicate anchors, as also mentioned in~\citet{esser.11}. This characteristic causes it to consistently perform less favorably compared to other methods unless the data is preprocessed in an adhoc fashion to remove similar columns of $\vX$; 
hence we do not include it in our list of baselines. We also do not compare with~\citet{arora.stoc12} 
since \hott reportedly performs better~\citep{bittorf.12} and the algorithm requires parameters which are hard to guess apriori.

\remove{
\begin{figure}[t]
\centering
\includegraphics[width=6cm,height=6cm]{figures/swimmer4images.eps}  
\caption{Four randomly selected images from the Swimmer dataset~\cite{DonohoStodden}}
\label{fig:swimmer4images}
\end{figure}
}
\vspace{-1mm}
\subsection{Synthetic experiments}
\vspace{-1mm}
\remove{
We experiment with two synthetic datasets: (a) Swimmer dataset~\cite{DonohoStodden}, and 
(b) a synthetic dataset that is generated by nonnegative combination of uniformly distributed anchors, with combination
 coefficients sampled from a Dirichlet distribution.

{\bf Swimmer dataset:} \\
Fig.~\ref{fig:swimmer4images} shows a few sample images from the Swimmer dataset~\cite{DonohoStodden}. It consists of total
256 images of a creature with four limbs, each having four possible orientations. The dataset closely satisfies the 
separability assumption except for a minor factor that all the images contain an invariant region (torso),
which makes the factorization non-unique.
We look at the representation dictionary or ``topics'' (the matrix $\vH$) learned by different methods on 
the Swimmer dataset and its noisy version. 
The noisy version is generated by selecting at random $\delta$ fraction of 
pixels in each image and adding the maximum pixel intensity value (which is $39$) to these pixels. 
For \cite{bittorf.12} and \cite{gillis.12}, we $\ell_1$-normalize the columns of data matrix $\vX$ and run these algorithms
on the normalized data (as required by these methods) to recover the anchor column indices $A$.
The matrix $\vH$ is then obtained by doing a nonnegative least squares
fit using the corresponding columns of the unnormalized data matrix (i.e., $\vH = \argmin_{\vY\geq0} \lVert \vX - \vX_A \vY\rVert_F^2$).
We try to learn an NMF of inner dimension $17$ for clean Swimmer data ($4$ limbs 
in $4$ possible orientations plus $1$ torso). For noisy version, we learn an NMF of $20$ inner dimension. 
The results are reported in the Appendix. In summary, the topics obtained by the proposed 
methods of NNOMP and Conic outperform other methods in terms of recovering the parts (limbs).
}
We perform a synthetic experiment that injects controlled amount of noise to corrupt the separable structure. 
Each entry of the matrix $\vW\in\mathbb{R}_+^{200\times 20}$ is generated i.i.d. according to a uniform distribution between 0 and 1.
The matrix $\vH\in\mathbb{R}_+^{20\times 210}$ is taken to be $[I_{20\times 20} \,\, {\vH}']$ where 
each column of ${\vH}'\in\mathbb{R}_+^{20\times 190}$ is generated according to a Dirichlet distribution 
whose parameters are chosen uniformly in $[0,1]$. The data matrix $\vX$ is set to $\vW\vH + \vN$ where each entry of
noise matrix $\vN$ is generated i.i.d. according to a Gaussian distribution with zero mean and std. dev. $\delta$.
Fig.~\ref{fig:gillisexp2} plots the fraction of correctly recovered anchors (averaged over 10 runs
for each value of $\delta$) against the noise level $\delta$ ranging from
$0$ to $1.5$. 
{\it The proposed \xray (max) shows the best noise-robustness in terms of anchor recovery, followed by \xray (dist) and GV}.
Although \xray (greedy) does not perfectly resolve the separable NMF problem ($\delta=0$), 
it performs better than \hott and is competitive with GV for near-separable case ($\delta >0$). 
As described below, on real datasets it turns out to be highly competitive.  
\xray (rand), although solves the separable problem ($\delta=0$), degrades significantly 
under noise, which shows that {\it proper selection of an exterior point to expand the current cone
is crucial for noise-robustness}.

\begin{figure}[h]
\centering
\includegraphics[clip=true,trim=0 0 0 7mm,width=7cm,height=5cm]{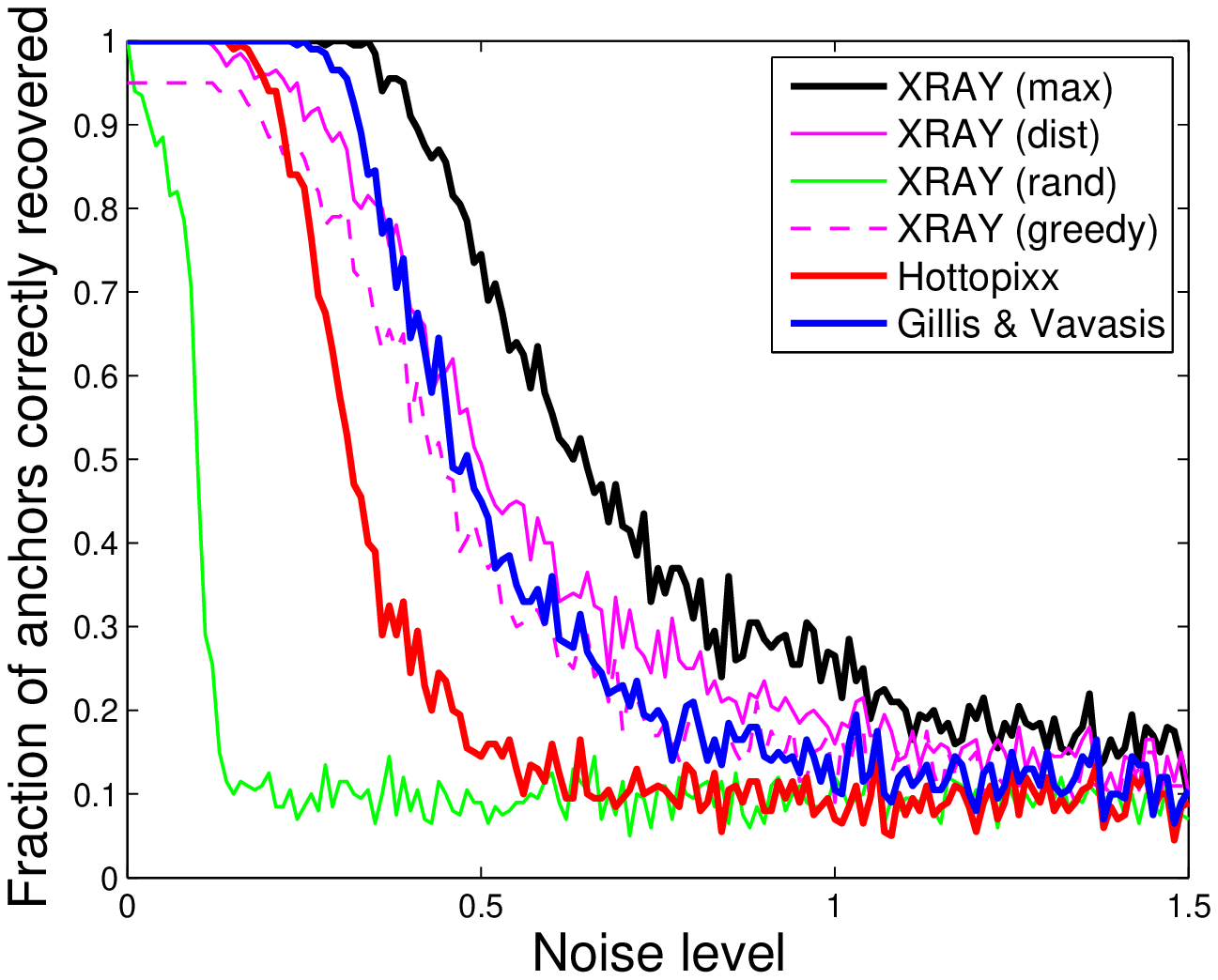}  
\caption{Anchor recovery rate versus noise level (best viewed in color)}
\label{fig:gillisexp2}
\end{figure}

\subsection{Medium-scale Topic modeling problems}
We evaluate the proposed methods on three human-labeled text datasets that are commonly used in topic modeling
literature: TDT-2 \citep{TDT2} ($m=9394, n=19528,r=30$), BBC \citep{bbc.greene06icml} ($m=2225,n=9635, r=5$) and
 Reuters \citep{reuters} ($m=7285, n=18221, r=10$). 
We used standard tf-idf representation with document frequency thresholding in constructing the data matrix $\vX$. 
As required in \hott and GV, we use $\ell_1$-normalized  
columns of $\vX$ (referred as matrix $\vX^{(\ell_1)}$ henceforth) to identify the anchor column indices $A^{(\ell_1)}$, 
and use the unnormalized data $\vX$ (and the corresponding
anchor columns $\vX_{A^{(\ell_1)}}$) for classification and clustering tasks. The use of $\vX_A^{(\ell_1)}$
in clustering and classification (for any index set $A$) resulted in significantly worse performance uniformly for
all methods so these results are not reported. For the sake of clarity in the figures, 
we do not show the results for \xray (max) which performed almost similar to \xray (dist) in these experiments.

\begin{figure*}[ht]
\newlength{\figwidthsvm}
\setlength{\figwidthsvm}{0.33\textwidth}
\begin{center}
\hspace{-5mm}\begin{minipage}{0.33\textwidth}
\centering
\includegraphics[width=1.1\columnwidth, height=4.5cm]{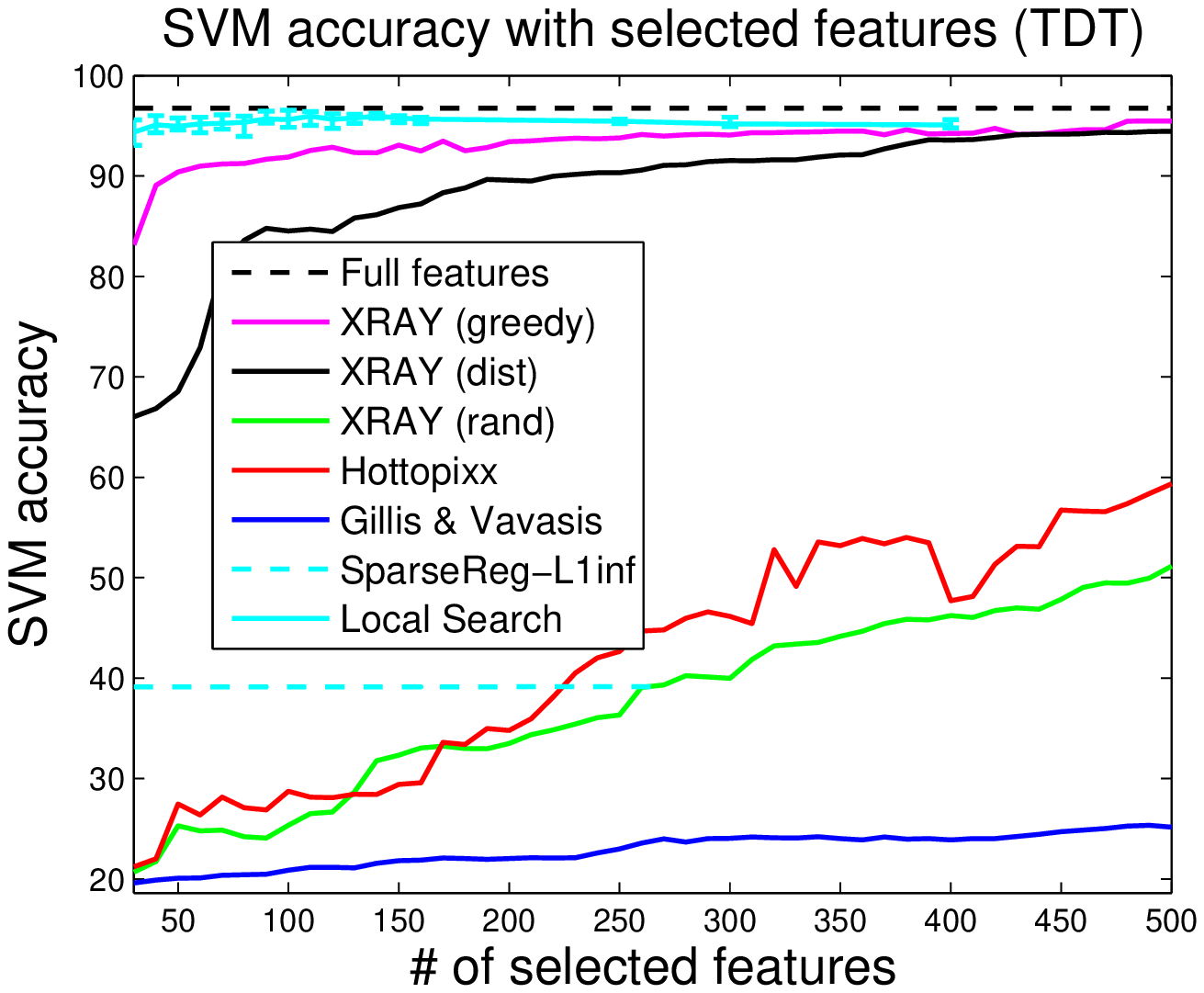}
\end{minipage}
\begin{minipage}{0.33\textwidth}
\centering
\includegraphics[width=1.1\columnwidth, height=4.5cm]{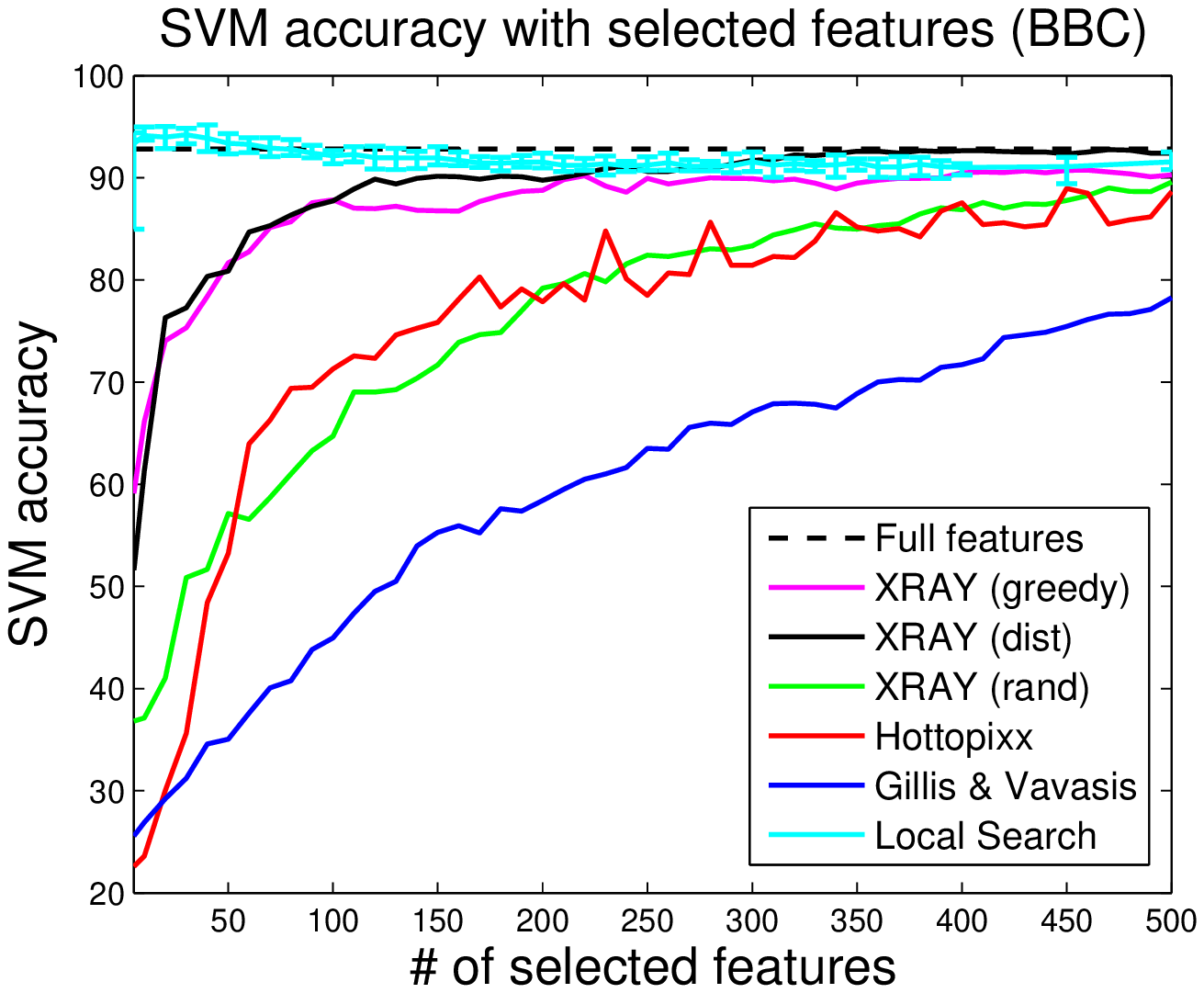}
\end{minipage}
\begin{minipage}{0.33\textwidth}
\centering
\includegraphics[width=1.1\columnwidth, height=4.5cm]{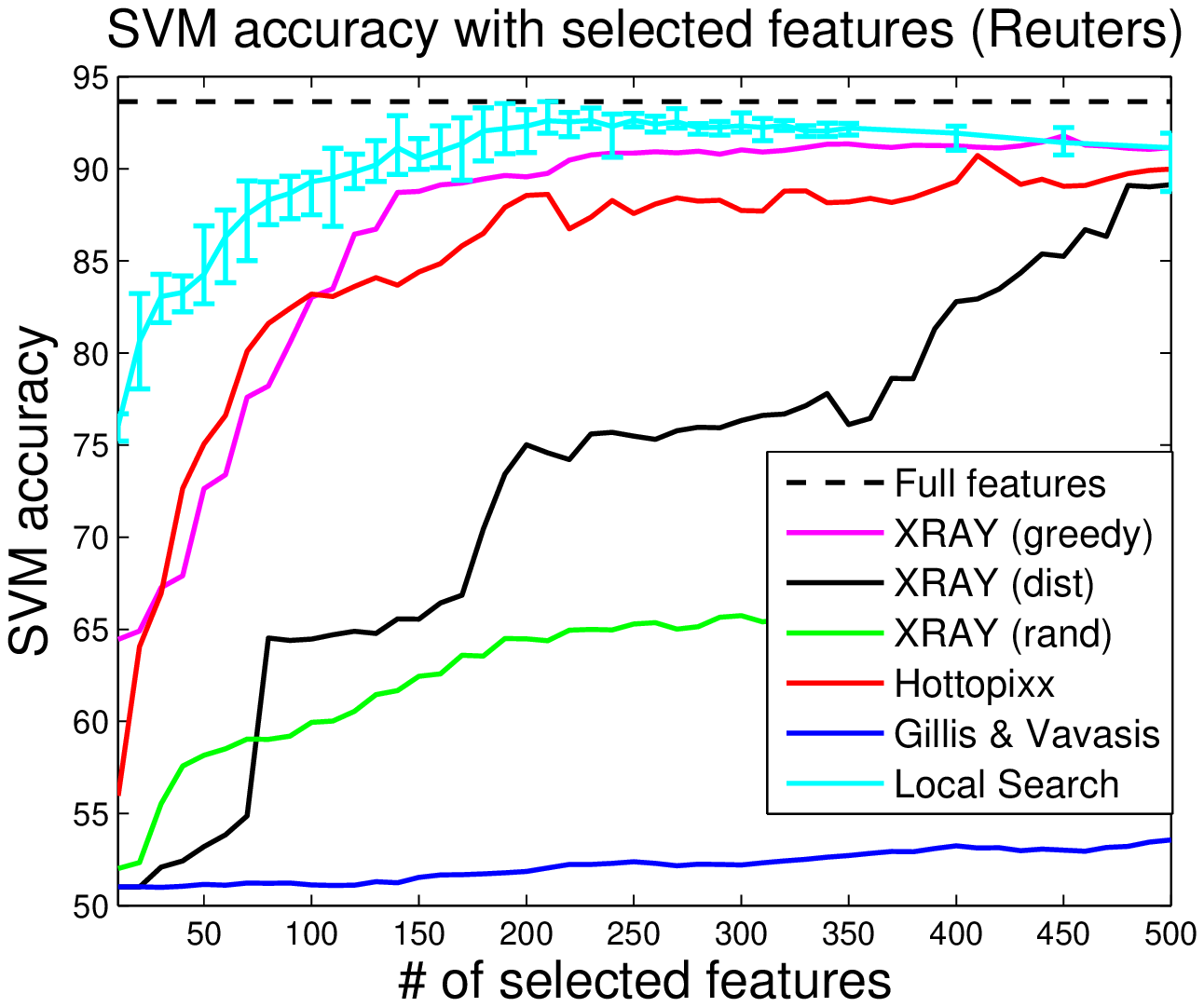}
\end{minipage}
\vspace{-2mm}
\caption{Classification Accuracy using selected features on TDT, BBC and Reuters datasets (best viewed in color)} 
\label{fig:svmacc}
\end{center}
\vspace{-3mm}
\end{figure*}

\begin{figure*}[ht]
\begin{center}
\hspace{-5mm}\begin{minipage}{0.33\textwidth}
\centering
\includegraphics[width=1.1\columnwidth, height=4.5cm]{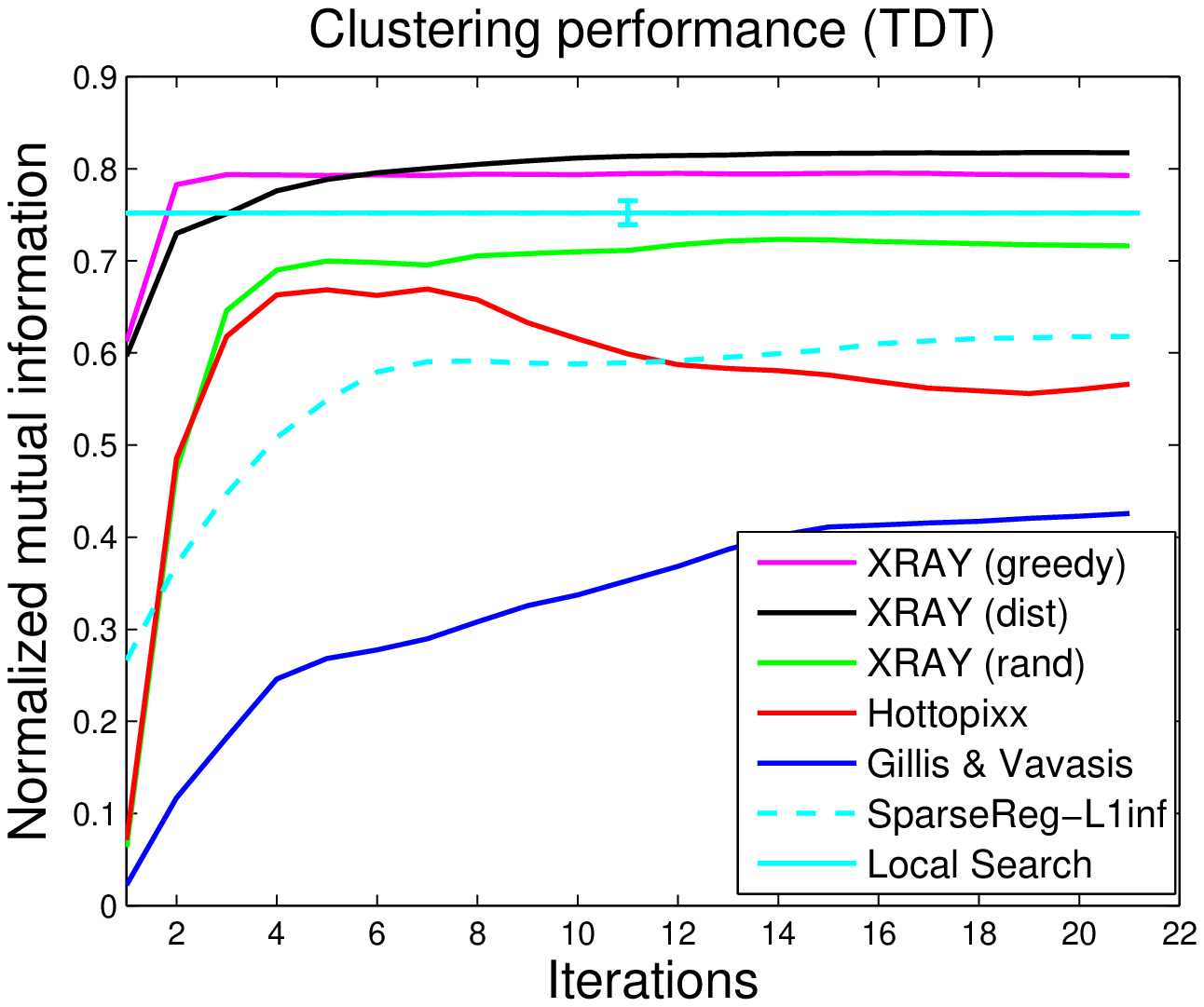}
\end{minipage}
\begin{minipage}{0.33\textwidth}
\centering
\includegraphics[width=1.1\columnwidth, height=4.5cm]{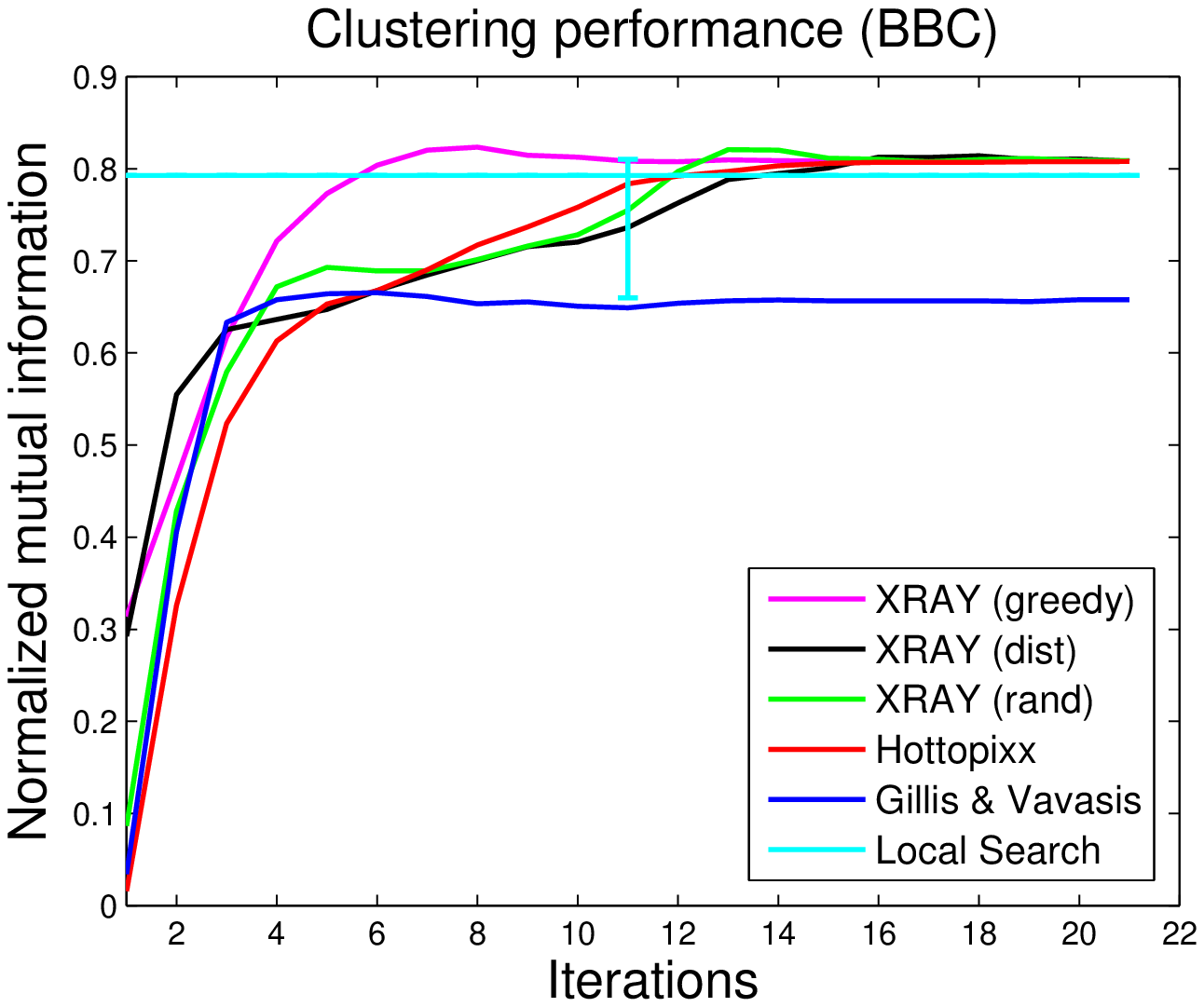}
\end{minipage}
\begin{minipage}{0.33\textwidth}
\centering
\includegraphics[width=1.1\columnwidth, height=4.5cm]{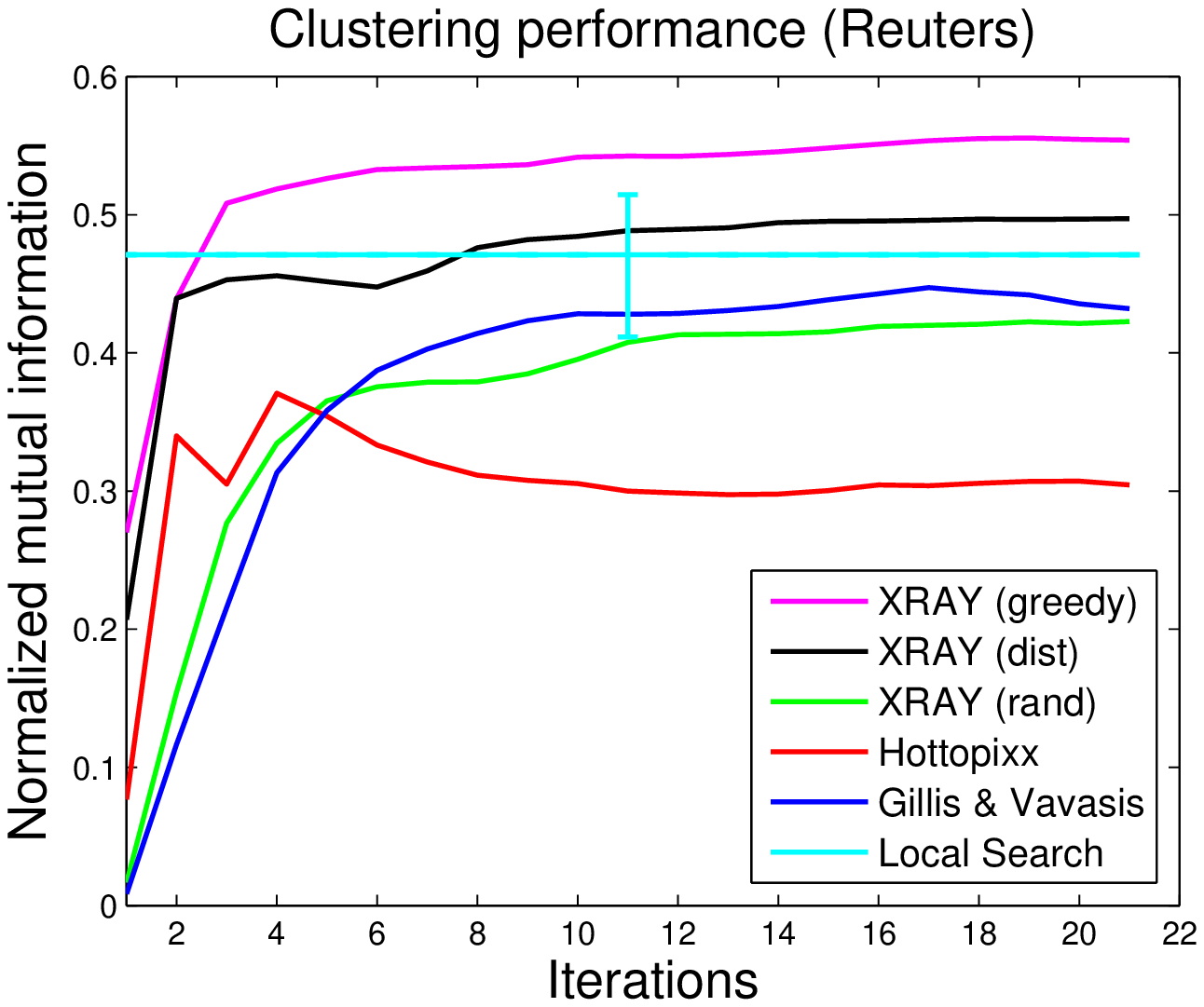}
\end{minipage}
\vspace{-2mm}
\caption{Clustering Performance on TDT, BBC and Reuters datasets (best viewed in color)} 
\label{fig:nmi}
\end{center}
\vspace{-5mm}
\end{figure*}

{\bf Classification experiments:} Figure~\ref{fig:svmacc} shows the classification accuracy results obtained with the features (columns of the document-term matrix restricted to anchor words) selected by different methods on the three datasets. 
\remove{The number of selected words (inner-dimension
$r$ of the factorization) is varied from $k$ to $500$ for each dataset, where $k$ is the number of unique labels
(30 for TDT, 5 for BBC and 10 for Reuters). We also report the classification accuracy with $\vW$ matrix
learned using local optimization methods for inner dimension $r$ varying from $k$ to $500$. We do ten runs of the local
search method for every value of $r$ with different random initializations and plot the 
mean accuracy (the error bars show the variation
around the mean accuracy).} Black dotted line is the classification accuracy with full features (all the words).
We use $5\%$ of the documents for training and the rest $95\%$ for
testing to emulate a semi-supervised learning scenario where we view various methods as inducing a topical representation based on all (unlabeled) data. We use multiclass SVM classifier as implemented 
in LIBLINEAR \citep{liblinear} 
and use four-fold cross validation to select the parameter $C$.
%
{\it Among separable NMF techniques, the proposed \xray (greedy) and \xray (dist) (with exception on Reuters) outperform \hott and GV
on all the three datasets, more so on TDT}. On average, traditional NMFs with local optimization perform quite well on these datasets especially when $r$ is small, but can show significant performance variance (shown as error bars) with respect to initialization.  As the number of topics increases, the performance gap between
the proposed methods and the local optimization method rapidly diminishes. {\it In this regime our techniques
are a viable alternative to local optimization methods, and have the advantage of being local-minima-free, i.e., eliminating uncertainty with respect to initialization
and therefore not requiring multiple runs}. \\

\remove{
\begin{table}
{\tiny
\begin{tabular}{|c|c|c|c|}
\hline
Conic & NNOMP & Gillis \& Vavasis & BRRT \\ \hlin

approves & algeria & acknowledging & 1930s \\
binyamin & annan & aging & ability \\
blasts & arkansas & audio & aboard \\
carole & bowl & bangladesh & abortion \\
checking & bulls & bible & abortions \\
clinic & butler & cleaning & abroad \\
drums & cable & designer & abruptly \\
dues & cart & drums & absence \\
erode & clinic & fared & absent \\
grows & dow & flames & absolute \\
incomplete & economic & gauge & absolutely \\
iraq & gene & gloomy & absorb \\
kaczynski & gold & graphic & activity \\
lewinsky & gulf & grasp & applied \\
mckinney & iraq & grows & approaches \\
motors & israeli & harold & attempting \\
nuclear & kaczynski & incomplete & baby \\
ordering & lewinsky & lawn & backward \\
pakistanis & luther & lightning & beat \\
pope & motors & ordering & benefit \\
prescriptions & nuclear & privatization & birth \\
proposing & oprah & reassuring & brazil \\
qatar & pope & restraints & calgary \\
shuttle & shuttle & selloff & competitor \\
spy & spkr & slipping & converted \\
suharto & stories & spilled & defenseman \\
tobacco & suharto & successive & dialogue \\
tuned & tobacco & tenure & murder \\
viagra & tornadoes & tolerate & zhu \\
winfrey & viagra & vocal & zone \\ \hline
\end{tabular}
\caption{Anchor words obtained by different approaches on TDT dataset}
\label{tab:anchorstdt}
}
\end{table}
}

{\bf Clustering experiments}: We also evaluated clustering performance by assigning a cluster label to each document based on the maximum element in the corresponding row of $\vW$. 
We refine the solution with a few iterations of alternating optimization. Figure~\ref{fig:nmi} shows the clustering performance in terms of Normalized Mutual Information (NMI) as these iterations proceed. We also show
the NMI obtained with local search method after it has converged to a local optimum 
(averaged NMI from ten runs with different random initializations
is shown; error-bar indicates the variation around the average). {\it Again, the proposed \xray methods are among the best performing
methods in terms of clustering performance and do not require multiple runs as traditional NMFs do}.  \\
\remove{ 
$\vW^{(0)}=\vW$ using a sequence of nonnegative least squares iterations. 
Each iteration 
consists of two steps in order: $\vH^{(i)}=\argmin_{\vH\geq 0} \lVert \vX - \vW^{(i-1)}\vH\rVert_F^2$ 
and $\vW^{(i)} = \argmin_{\vW\geq 0} \lVert \vX - \vW\vH^{(i)}\rVert_F^2$. NMI at iteration $i$ 
is computed using the matrix $\vW^{(i)}$. This process is similar to running local search methods for
NMF~\cite{cjlin.nmf07,ndho.descentnmf} with $\vW$ initialized from the separable NMF based methods.
}

{\bf Effect of column normalization:} 
In text processing, tf-idf features are popular due to their good
empirical performance in various tasks. 
However, most of the previously proposed methods for the separable NMF problem
\citep{bittorf.12,gillis.12,arora.stoc12} require the columns of $\vX$ 
(i.e, words for text data) to be
$\ell_1$ normalized, which can disturb the tf-idf structure. 
We conduct a small experiment to study the effect of word normalization on the prediction performance
of the proposed methods. 
We identify anchor sets $A^{(\ell_1)}$, $A^{(\ell_2)}$ and $A$ by the proposed methods using
the data matrices $\vX^{(\ell_1)}$ ($\ell_1$-normalized columns), $\vX^{(\ell_2)}$ ($\ell_2$-normalized columns)
and $\vX$, respectively and use $\vX_{A^{(\ell_1)}}$, $\vX_{A^{(\ell_2)}}$ and $\vX_{A}$ for classification (same SVM setup
as described earlier).
Table~\ref{tab:svmaccdiffnorms} shows the classification accuracy for 100 topics on the three datasets. 
These empirical results suggest that $\ell_1$ {\it normalization
of words on top of tf-idf features, as required by other separable NMF methods, can actually adversely affect the predictive quality of the selected anchors}.
\begin{table*}[h]
\centering
{
\caption{Effect of word normalization. The numbers are classification accuracies with selected features for $r=100$.}
\begin{tabular}{c|c|c|c|c|c|c|}
\cline{2-7}
 		& \multicolumn{3}{c|}{\xray (dist)} & \multicolumn{3}{c|}{\xray (greedy)} \\ \cline{2-7}
 		& $\ell_1$ & $\ell_2$ & None & $\ell_1$ & $\ell_2$ & None \\ \hline
\multicolumn{1}{|c|}{TDT} & 21.06 & 83.04 & 84.52 & 31.69 & 90.05 & 91.87 \\ \hline 
\multicolumn{1}{|c|}{BBC} & 58.59 & 84.36 & 87.73 & 75.41 & 82.18 & 87.82\\ \hline 
\multicolumn{1}{|c|}{REUT} & 51.88 & 76.88 & 64.46 & 55.65 & 81.28 & 83.02 \\ \hline 
\end{tabular}
\label{tab:svmaccdiffnorms}
}
\end{table*}

 \begin{table}[h]
\centering
{\small
\caption{Anchors (red) and top keywords for a few sample topics from TDT dataset.}
\begin{tabular}{|l|l|}
\hline
\multirow{3}{*}{\small \xray (dist)} & \red{lewinsky};monica;grand;jury;starr;intern;white;clinton;house;counsel \\
  & \red{iraq};weapons;baghdad;inspectors;iraqi;annan;council;military;inspections;sites\\
  & \red{tobacco};industry;senate;settlement;smoking;bill;legislation;companies;billion;minnesota \\
  & \red{suharto};indonesia;indonesian;habibie;jakarta;president;riots;anti;reforms;resign \\
  & \red{shuttle};columbia;space;astronauts;nasa;mission;;crew;rats;aboard;experiments\\ \hline

\multirow{3}{*}{\small \xray (greedy)} & \red{lewinsky};monica;grand;jury;starr;intern;white;clinton;house;counsel \\
  & \red{iraq};weapons;baghdad;inspectors;inspection;military;council;united;team;strike \\
  & \red{tobacco};industry;senate;settlement;smoking;bill;legislation;companies;billion;minnesota \\
  & \red{suharto};indonesia;habibie;indonesian;jakarta;president;riots;anti;resign;reforms \\
  & \red{shuttle};columbia;space;astronauts;nasa;mission;crew;rats;aboard;experiments\\ \hline
  
\multirow{3}{*}{\small Gillis \& Vavasis} & \red{acknowledging};constitute;contact;physical;privately;intern;sexual;relationship;matter\\ 
  & \red{approves};warns;word;baghdad;deal;nations;iraq;united;approved;financing\\ 
  & \red{successive};alive;votes;checking;bill;supporters;dead;tobacco;stories;senate \\
  & \red{grows};secure;feels;worst;suharto;leave;hour;indonesia;country;crisis \\
  & \red{tall};astronauts;columbia;space;rotating;experiments;ear;inch;chair;stretch\\\hline 
  
\multirow{3}{*}{\small \hott} & \red{acknowledging};constitute;physical;intern;sexual\\  
  & \red{ability};weapons;iraq;destruction;deny;tuned;develop;determined;mass;clinton \\
  & \red{accountable};tobacco;fighting;brands;bill;gop;legislation;desperately;intend;senate \\
  & \red{absence};worsened;turmoil;indonesia;political;suharto;leads;thinks;exercise;track \\
  & \red{approaching};shuttle;columbia;space;nasa;experiments;extending;astronauts;weather\\ \hline
\end{tabular}
\label{tab:topicstdt}
}
\end{table}

\begin{table*}[t]
\centering
{\small 
\caption{Datasets used for large-scale experiments. Times are for $r=100$ on
eight cores on \texttt{daniel}.} 
\begin{center}
\begin{tabular}{|c|c|c|c|c|c|c|c|} 
\hline
Name & $\#$documents & $\#$words & nnz($\vX$) & nnz($\vX^T\vX$) & Sparsity($\vX^T\vX$) & Time($r=100$) & Memory \\
\hline
RCV1 & 781265 & 43001 & 59.16e06 & 172.3e06  & 5\% & 409 secs & 3.6 GB\\
\hline
PPL2 & 351849 & 44739 & 19.43e06 & 1.99e09 & 99.8\% & 1147 secs & 30.2 GB\\
\hline
IBMT & 124708 & 25998 & 1.03e06 & 1.77e06 & 0.2\% & 9.8 secs & 1 GB\\
\hline
\end{tabular}
\end{center}
\label{tbl:datasets}
}
\end{table*}
\begin{table*}[t]
\centering
{
\caption{Running times of \xray versus \hott on $8$ threads for $r$ topics and $E$ epochs.}  
\begin{tabular}{|c|c|c|c||c|c|c||c|c|c|}  \hline
\multirow{2}{*}{Dataset} & 
\multicolumn{3}{|c||}{\xray (secs)} &
\multicolumn{6}{|c|}{\hott (secs)} \\ \cline{2-10}
 & & & & \multicolumn{3}{|c||}{E=5} & \multicolumn{3}{|c|}{E=10} \\ \cline{5-10}
 & $r$=25 & $r$=50  & $r$=100 & $r$=25 & $r$=50 & $r$=100 & $r$=25 & $r$=50 &$r$=100 \\ \hline
IBMT&0.38&1.78 &9.8 &338.6 & 337.2& 327.7&642.1 &668.5 & 636.9\\ \hline
RCV1&15.4&67.2 &409 &2026.8&1938.3&1883.6&3769  &3774.7&3888.9\\ \hline
PPL2&196 &443.8&1147&1818.1&1935.5&1892.8&3725.2&3895.5&3913.7\\ \hline
\end{tabular}
}
\label{tbl:comp}
\end{table*}

{\bf Quality of anchor words}: Qualitatively, we found that anchor words selected by the proposed \xray methods tend to be more representative of the topics compared to those
selected by \hott and GV.  Table~\ref{tab:topicstdt} shows top words and anchors for a few topics 
(\emph{Lewinsky scandal}, \emph{Iraq nuclear program}, \emph{National Tobacco Settlement}, \emph{Indonesia riots of 1998} and \emph{Columbia space shuttle})
extracted from the TDT dataset.


\subsection{Large-scale Experiments}
\label{subsec:implementation}
\begin{figure}[t]
\centering
\includegraphics[width=0.4\linewidth,height=6cm]
{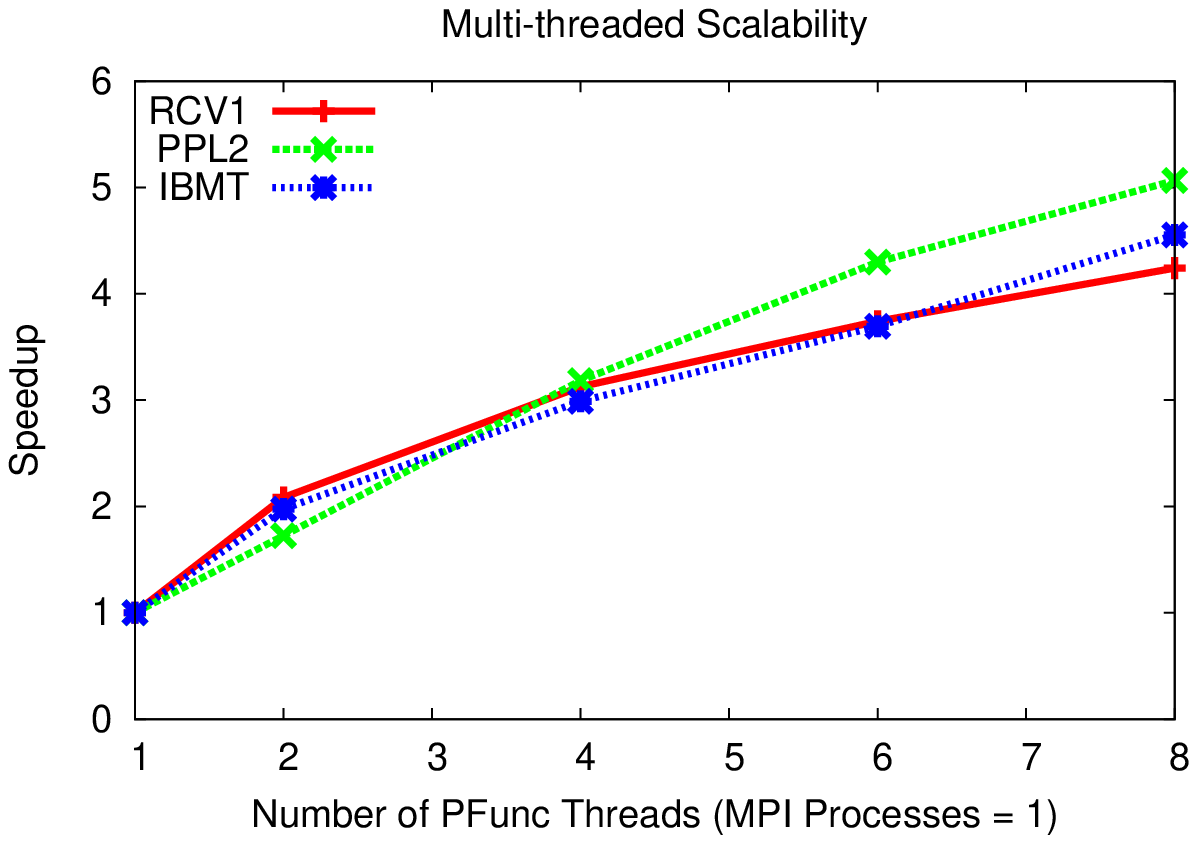}
\caption{Multi-core speedup of RCV1, IBMT, and PPL2 datasets when running on
\texttt{daniel}. All times are for R=100.}
\label{fig:pfunc}
\vspace{-4mm}
\end{figure}

We implemented a shared- and distributed-memory parallel version of
\xray in \Cpp{}.
That is, our implementation can exploit parallelism when running on multi-core
machines, or on clusters of multi-core machines.
For shared-memory parallelism, we use PFunc~\citep{kambadur09:pfunc}, a
lightweight and portable library that provides C and \Cpp{} APIs to express
task parallelism.
For distributed-memory parallelism, we use MPI\footnote{\url{http://www.mpi-forum.org/}}, a popular
library specification for message-passing that is used extensively in
high-performance computing.

To test the shared-memory performance and scalability of \xray, we ran
experiments on \texttt{daniel}, a dual-socket, quad-core Intel\textregistered{}
Xeon\texttrademark{} X5570 machine with 64GB of RAM running Linux Kernel
2.6.35-24 (total 8 cores).
For compilation, we used GCC v4.4.5 with: ``\texttt{-O3 -fomit-frame- pointer
-funroll-loops}'' in addition to PFunc 1.02, OpenMPI 1.4.5 and untuned ATLAS
BLAS.
We ran large-scale experiments on three datasets: RCV1 \citep{RCV1}, 
co-occurence matrix of people and places from ClueWeb09 
dataset \citep{PPL2}, and IBM Twitter (IBMT) dataset. 
The statistics relating to these three large datasets are presented in 
Table~\ref{tbl:datasets}.We report scalability results for \xray (greedy) - other variants are computationally very similar.
%
 
%
Figure~\ref{fig:pfunc} depicts the multi-threaded performance of our
implementation on \texttt{daniel} while detecting 100 topics.
Our implementation is able to factorize RCV1 in 409 seconds on 8 cores and
achieve $4.2x$ speedup over 8 threads when compared to the sequential
implementation.
Similarly, for IBMT we achieve $4.5x$ speedup, while completing the
factorization in $9.8$ seconds on 8 cores. For the dense $\vX^T \vX$ case, we are able to factorize PPL2 in 1147 seconds with just 8
cores.
We believe that further speedup improvements can be demonstrated on these problems  
by (a) optimizing the data layout of various sparse matrices to
alleviate memory contention amongst threads, and (b) in dense problems such as
PPL2, by using a version of BLAS tuned to our architecture and by reorganizing
our implementation around more BLAS-3 operations that have better memory to
compute ratio than BLAS-1 or 2 operations.
Our implementation showed good scalability on
distributed-memory machines as well (details omitted for brevity).

To compare our performance against the state-of-the-art \hott algorithm \citep{bittorf.12}, we ran
their algorithm on \texttt{daniel} with the options ``\texttt{--dual 0.01
--epochs 10 --splits 8 --hott <R> --normse 1 --primal 1e-6}'' set in close consultation
with the authors.
A detailed comparison is shown in Table~\ref{tbl:comp}.
A head-to-head comparison is difficult because of the different performance
characteristics of \hott and \xray. 
For example, \hott's {\it per-epoch} runtime is not dependent on $r$, the number
of topics, but it's accuracy is dependent on $E$, the number of epochs, while
our methods execute exactly $r$ iterations, where each iteration has a
superlinear dependence on $r$. 
Nonetheless, for all three datasets, we see that \xray performs better
than \hott even when \hott is run only for 5 epochs.
In particular, for the sparse datasets IBMT and RCV1, \xray runs to 
completion in significantly shorter amount of time than \hott.
%

\section{Conclusions and Future Work}
Our  methods perform favorably in comparison to other recently proposed separable NMF algorithms and offer highly scalable local-minima-free alternatives to existing 
local optimization techniques. Future work includes a formal noise analysis of the proposed algorithms, investigating the streaming
setting where documents or words arrive in an online fashion, and using our models for social media content analysis. \\
 

{\bf Acknowledgments:} We thank Victor Bittorf and Ben Recht for graciously providing their code, associated parameters and technical support. We thank Haim Avron, Christos Boutsidis, Ken Clarkson, Rick Lawrence and Ankur Moitra for insightful and enthusiastic discussions. \\

{\textit{Research was sponsored by the U.S. Defense Advanced Research Projects Agency (DARPA) under the Social Media in Strategic Communication (SMISC) program, Agreement Number W911NF-12-C-0028. The views and conclusions contained in this document are those of the author(s) and should not be interpreted as representing the official policies, either expressed or implied, of the U.S. Defense Advanced Research Projects Agency or the U.S. Government. The U.S. Government is authorized to reproduce and distribute reprints for Government purposes notwithstanding any copyright notation hereon.}}


\sloppy
\bibliography{fastconicalhull}
\bibliographystyle{icml2012}

\end{document}